\newtheorem{definition}{Definition}
\newtheorem{lemma}{Lemma}
\newtheorem{theorem}{Theorem}
\newtheorem{corollary}{Corollary}
\DeclareMathOperator*{\argmax}{\arg\!\max}
\begin{document}
%
\title{Learning of Agent Capability Models with Applications \\in Multi-agent Planning}
\author{Yu Zhang and Subbarao Kambhampati\\
School of Computing and Informatics\\
Arizona State University\\
Tempe, Arizona 85281 USA\\
\{yzhan442,rao\}@asu.edu\\
}
\nocopyright
\maketitle

\begin{abstract}
\begin{quote}

One important challenge for a set of agents to achieve more efficient collaboration is for these agents to maintain proper models of each other. 
An important aspect of these models of other agents is that they are often partial and incomplete.
Thus far, there are two common representations of agent models: MDP based and action based,
which are both based on action modeling. 
In many applications, agent models may not have been given, and hence must be learnt. 
While it may seem convenient to use either MDP based or action based models for learning, 
in this paper, we introduce a new representation based on capability models, which has several unique advantages. 
First, we show that learning capability models can be performed efficiently online via Bayesian learning, 
and the learning process is robust to high degrees of incompleteness in plan execution traces (e.g., with only start and end states). 
While high degrees of incompleteness in plan execution traces presents learning challenges for MDP based and action based models, 
capability models can still learn to {\em abstract} useful information out of these traces.
As a result, capability models are useful in applications in which such incompleteness is common, 
e.g., robot learning human model from observations and interactions.
Furthermore, when used in multi-agent planning (with each agent modeled separately), 
capability models provide flexible abstraction of actions.
The limitation, however, is that the synthesized plan is incomplete and abstract. 

\end{quote}
\end{abstract}

\section{Introduction}
\label{sec:intro}

One important challenge for a set of agents to achieve more efficient collaboration is for these agents to maintain proper models of others. 
These models can be used to reduce communication and collaboration efforts.
In many applications, agent models may not have been given, and hence must be learnt. 
Thus far, there are two common representations of agent models: 
MDP based \cite{Puterman:1994} and action based \cite{Fox03}, which are both based on {\em action modeling}.
In this paper, we introduce a new representation based on capability models.
We represent a capability as the ability to achieve a partial state given another partial state.
Contrasting to action modeling in MDP based and action based models, 
a capability implicitly corresponds to a set of action sequences (or plans) that satisfy a specification, 
described as the transition between two partial states. 
Each such action sequence is called an {\em operation}. 
The capability model represents a model that captures the probabilities of the existence of an operation for these specifications. 

Compared to MDP based and action based models, 
which do not naturally represent incomplete and abstract models,
capability model has its unique benefits and limitations. 
In this aspect, capability models should not be considered as competitors to these more complete models.
Instead, they are more useful when only incomplete and abstract information can be obtained for the models, e.g., human models. 

The representation of the capability model is a generalization of a two time slice dynamic Bayesian network ($2$-TBN).
Each node at the first level (also called a {\em fact node}) represents a variable that is used in the specification of a world state. 
The underlying structure, i.e., the (potentially partial) causal relationships between the fact nodes, 
is assumed to be provided by the domain expert. 
Furthermore, for each fact node, there is also a corresponding node that represents the final value of this fact as the result of an operation.
In this paper, these corresponding nodes are called {\em eventual} nodes or e-nodes. 
The links between the e-nodes, as well as the links from the fact nodes to the e-nodes in the capability model, 
follow the same causal relationships (i.e., links) between the fact nodes.
Furthermore, each fact node is also connected to its corresponding e-node (from the fact node to e-node). 
Figure \ref{fig:cap-model} presents a capability model. 
One remark is that learning causal relationships to dynamically update the structures of capability models is possible \cite{White199099},
which is to be considered in future work.
We argue that this information is easier to obtain for certain models, e.g., human models.
Instead of modeling individual human actions, it is easier to associate causes with effects.
For example, given a cup of water, the effect could be a cup of milk or coffee, 
even though complex sequences of actions may be required for these effects.

\begin{figure}
\centering
{
    \includegraphics{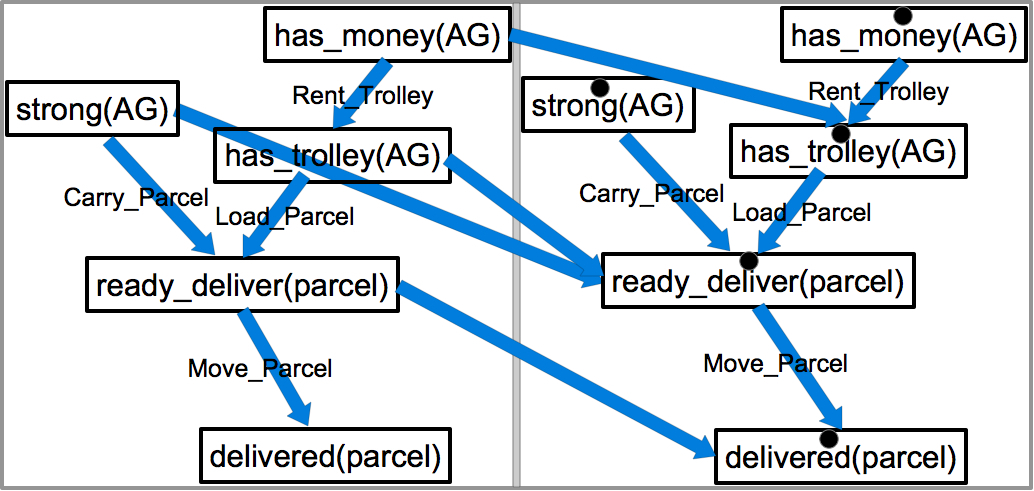}
}
\caption{Capability model (as a generalized $2$-TBN) for a human delivery agent (denoted as $AG$) in our motivating example with $5$ fact node and e-node pairs. 
Each fact node corresponds to a variable in the specification of the agent state and related world state. 
e-nodes are labeled with a dot on top of the variable.
Links are labeled with action names where applicable. 
Links from the fact nodes to the corresponding e-nodes are not shown for a cleaner representation.}
\label{fig:cap-model}
\end{figure}

Learning the parameters of capability models can be performed efficiently online via Bayesian learning. 
One of the unique advantages of capability model is that its learning process is robust to high degrees of incompleteness in plan execution traces. 
This incompleteness occurs commonly.
For example, the execution observer may not always stay close enough to the executing agent,
and the executing agent may not report the full traces \cite{Zhuo:2013}.
Capability models can even learn when only the initial and final states are given for plan executions.
While high degrees of incompleteness in plan execution traces presents learning challenges for MDP based and action based models, 
capability models can still learn to {\em abstract} useful information out of these traces, 
e.g., the probability of the existence of a plan (operation) to achieve a final (partial) state given an initial (partial) state.
When more information about the traces is given, it can be used to refine the capability models as with the other models.
Hence, capability model is useful for applications in which the agent models must be learnt from incomplete traces, 
e.g., robot learning human model from observations and interactions. 

Furthermore, compared to MDP based and action based representations, 
capability models provide more flexible abstraction of actions. 
As a result, when using capability models in multi-agent planning, 
the planning performance can be potentially improved. 
Note that ``planning'' with a single agent as a capability model is essentially Bayesian inference.
The limitation with using capability models in planning, however, is that the synthesized plan is incomplete and abstract (see details later).
Nevertheless, capability models can be used to create approximate multi-agent plans. 
In our settings, we assume a centralized planner for now. 
Also, part of the agents are each modeled by a capability model as in Figure \ref{fig:cap-model}, 
and other agents are modeled with more complete models (e.g., action based models).
Agents with more complete models can be used to refine the plans when necessary.
This situation can naturally occur in human-robot teaming scenarios, 
in which human models need to be learnt and the robot models are given.
At the end of this paper, we present two problem formulations to demonstrate this.   

%

\section{Motivating Example}
\label{sec:example}

We start with a motivating example to demonstrate the motivations for the capability model. 
In this example, we have temporarily recruited human delivery agent, 
and the tasks involve delivering different parcels to their specified destinations. 
To deliver a parcel, an agent needs to first drive to a location that is close to the delivery location and then move the parcel to the location.
However, there are a few complexities here.
An agent may not be able to $carry\_parcel$, since the agent may not be strong enough. 
While the agent can use a trolley to $load\_parcel$, the agent may or may not remember to bring a trolley. 
The agent can visit a local rental office to $rent\_trolley$,
but there is a chance that the agent may have forgotten to bring money for the rental. 
The trolley increases the probability of the agent being able to move the parcel to the delivery location. 
The rented trolley needs to be returned by the agent after use at the end of the day. 
Figure \ref{fig:cap-model} presents the capability model for an individual human delivery agent. 
 
A capability model is a generalized $2$-TBN. 
Meanwhile, one important difference between capability model and $2$-TBN is,
while the nodes in the same time slice in a $2$-TBN are assumed to be synchronous,
fact nodes (or e-nodes) in capability models are not necessarily synchronous within the (partial) initial and final states of an operation.
This can also be seen from Figure \ref{fig:cap-model}, 
in which the edges between fact nodes (or e-nodes) can be labeled with actions. 
For example, given a capability specification,
suppose that $has\_money(AG)$ is {\em true} in the initial state, and $delivered(parcel)$ is {\em true} in the final state.
If a trolley is used in an operation for this capability, in which $has\_trolley(AG)$ becomes {\em true} only after applying the action $rent\_trolley$,
$has\_trolley(AG)$ and $has\_money(AG)$ are clearly not synchronous in the initial state of this operation.
Capability models allow us to encode the probabilities of transitions between two partial states.
 
To learn the agent models from experiences, the agents must be given a set of delivery tasks. 
However, the only information that the manager has access to may be the number of parcels that have been delivered. 
While this provides significant learning challenges for the parameters of MDP based and action based models,  
the useful information for the manager is already encoded: the probability that the agent can deliver a parcel.
Capability model can learn this information from the incomplete traces.
When more information is provided, it can be used to refine the capability models as with the other models. 
One of our goals is to provide a model that can learn useful information,
subject to various levels of incompleteness in plan execution traces. 

Now, suppose that the manager (i.e., a centralized planner) also has a set of robots (with action based models) to use that can bring goods into the delivery van. 
Observing that the presence of trolley increases the probability of delivery success, 
the manager can make a multi-agent plan in which the robot always ensures that there is a trolley in the van,
before the delivery agents start working.
This illustrates how capability models can be combined with other more complete models to refine the plans.

\section{Related Work}
\label{related}

In MDP based representation, each node represents a state, 
and each edge represents a transition between the two connected states after applying an action.
In action based representation, each action has a set of preconditions that must be satisfied in the current state, 
and a set of effects that specify changes to the current state when the action is applied. 
Assuming that the model structures are given (as with capability models) by the domain expert, 
the learning task involves only learning the model parameters.
The learning is performed given a set of the training examples, 
which are plan execution traces of the agent whose model is being learnt.

For MDP based representation, the agent is often modeled as a Partially Observable MDP \cite{Kaelbling:1996}) or POMDP,
with an underlying Hidden Markov Model (HMM).
The learning process needs to learn the transition probabilities between the states, 
as well as the probabilities of observations given the hidden states. 
Although researchers have investigated efficient learning methods \cite{Strehl07onlinelinear}, 
in general, learning in such models is still an intense computational task. 
Online learning methods for POMDP models have also been explored \cite{Shani05model-basedonline}.
One note is that capability model can be considered as a special factored-MDP,
which is often represented as a dynamic Bayesian network \cite{Sanner_relationaldynamic}.

In terms of learning for action based models, there are many previous works that discuss about learning the structures of actions, 
e.g., \cite{Zhuo:2013}.
As discussed previously, learning in our context is more about learning the parameters. 
In this aspect, there are not many works about parameter learning in action based models.

Meanwhile, both MDP based and action based models have been used in multi-agent planning. 
While Dec-POMDP \cite{Seuken:2007} and other related models (e.g., \cite{Gmytrasiewicz:2004}) have been popular, 
multi-agent planning for action based models has so far been concentrated on planning with deterministic models \cite{brafman2008}.
Planning with multiple agents using action based models in stochastic domains still remains to be investigated.

Capability model has connections to HTNs \cite{Erol_htnplanning}, since both provide flexible abstractions.
Finally, although there are other works that discuss about capability models, e.g., \cite{JenniferAAAI148476},
they are still based on action modeling.

\section{Capability Model}
\label{sec:cap-model}

For simplicity, we assume in this paper that the state of the world is specified as a set of boolean variables $X$,
which includes the agent state. 
More specifically, for all $X_i \in X$, $X_i$ has domain $D(X_i) = \{${\em true}, {\em false}$\}$. 
First, we formally define capability for agent $\phi$, in which we denote the set of variables that are relevant to $\phi$ as $X_\phi \subseteq X$. 

\begin{definition}[Capability]
Given an agent $\phi$, and four subsets of variables of $X_\phi$, $A$, $B$, $C$ and $D$, 
a capability, specified as $C \wedge \neg D \rightarrow A \wedge \neg B$, 
is an assertion about the existence of plans with initial state $I$ and goal state $G$ that satisfy the following:\\
$\forall{X_i \in C}, I[X_i] = $ {\em true}, \\$\forall{X_i \in D}, I[X_i] = $ {\em false}, \\
$\forall{X_i \in A}, G[X_i] = $ {\em true},\\ $\forall{X_i \in B}, G[X_i] = $ {\em false}.
\end{definition}
in which $S[X_i]$ represents the value of $X_i$ in state $S$. 

We call a plan that satisfies the above specification of a capability as an {\em operation}. 
Recall that the capability model captures the probabilities of the existence of an operation based on the specifications of capabilities,
given in the form of $C \wedge \neg D \rightarrow A \wedge \neg B$.
For brevity, we use $A$ to denote that $A = \{${\em true}$\}$ and $\neg A$ to denote that $A = \{${\em false}$\}$.
We construct the capability model of an agent as a Bayesian network from (potentially partial) causal relationships,
which are assumed to be provided by the domain writer. 
Due to this partiality, certain causal relationships among variables may not have been captured.
This also means that certain variables may not have been included in $X_\phi$ even when it should have. 
For example, whether an agent can drive a car to a new location is dependent on whether the agent can drive a manual car, 
even through the agent has a driver license.
However, the ability to drive a manual car may have been ignored by the domain expert when creating the capability model.

We model the capability model of each agent as an {\em augmented Bayesian network} \cite{neapolitan2004}.
We use augmented Bayesian network since it allows various types of prior beliefs to be specified. 
Furthermore, we make the {\em causal embedded faithfulness assumption} \cite{neapolitan2004}.
This assumption assumes that the probability distribution of the observed variables is encoded accurately 
(i.e., embedded faithfully \cite{neapolitan2004}) in a causal DAG that contains these variables and the hidden variables (i.e, variables that are not modeled).
While there are exceptions in which this assumption does not hold, 
their discussion is beyond the scope of this paper (see \cite{neapolitan2004} for details).
First, we introduce the definition of augmented Bayesian network. 

\begin{definition}
An augmented Bayesian network (ABN) $(G, F, \rho)$ is a Bayesian network with the following specifications:
\begin{itemize}
	\item A DAG $G = (V, E)$, where $V$ is a set of random variables, $V = \{V_1, V_2, ..., V_n\}$.
	\item $\forall V_i \in V$, an auxiliary parent variable $F_i \in F$ of $V_i$,
	 and a density function $\rho_i$ associated with $F_i$. 
	Each $F_i$ is a root and it is only connected to $V_i$. 
	\item $\forall V_i \in V$, for all values $pa_{i}$ of the parents $PA_i \subseteq V$ of $V_i$, and for all values $f_{i}$ of $F_i$, 
	a probability distribution $P(V_i | pa_{i}, f_{i})$. 
\end{itemize}
\label{def:abn}
\end{definition}

A capability model of an agent then is defined as follows:

\begin{definition}[Capability Model]
A capability model of an agent $\phi$, as a binomial ABN $(G, F, \rho)$, has the following specifications:
\begin{itemize}
	\item $V_\phi = X_\phi \cup \dot{X_\phi}$.
	\item $\forall X_i \in V$, the domain of $X_i$ is $D(X_i) = \{$true, false$\}$.
	\item $\forall X_i \in V$, $F_i = \{F_{i1}, F_{i2},...\}$, and each $F_{ij}$ is a root and has a density function $\rho_{ij}(f_{ij})$ ($0 \leq f_{ij} \leq 1$).
	\item $\forall X_i \in V$, $P(X_i = true | pa_{ij}, f_{i1},...f_{ij},...) = f_{ij}$.   
\end{itemize}
\label{def:cap}
\end{definition}
in which $j$ in $pa_{ij}$ indexes into the values of $PA_i$, while $j$ in $F_i$ indexes into the $j$th variable.  
$\dot{X_\phi}$ represents the set of {\em eventual} nodes or e-nodes that correspond (i.e., one-to-one mapping) to the nodes in $X_\phi$.
Recall that e-nodes represent the final value of a variable after an operation.

As discussed previously, the edges in the capability model of an agent is constructed based on (potentially partial) causal relationships. 
We assume that there are no causal feedback loops. 
Otherwise, loops can be broken randomly. 
Denote the set of edges that represent the given causal relationships between variables in $X_\phi$ as $Y_\phi$.
For each edge $X_i \rightarrow X_j \in Y_\phi$, we also add an edge $X_i \rightarrow \dot{X}_j$.
Furthermore, for each $X_i \in X_\phi$, we also add an edge $X_i \rightarrow \dot{X}_i$.
Denote the set of edges added as $\dot{Y}_\phi$. 
We then have $G_\phi = (V_\phi, E_\phi)$ in the capability model satisfy $E_\phi = Y_\phi \cup \dot{Y}_\phi$.
Figure \ref{fig:cap-model} provides a simple example of a capability model. 

\section{Model Learning}

The learning of the capability model (for agent $\phi$) is performed online through Bayesian learning.
The parameters (i.e., $f_{ij}$ for $F$) in the capability models of agents can be learnt from the previous plan execution traces.
Plan execution traces can be collected each time that an operation succeeds or fails. 
When the values of this set of parameters are assumed, the conditional distributions in Def. \ref{def:cap} are also known.

\begin{definition}[Plan Execution Trace]
A plan execution trace is a sequence of discontinuous state observations, $\mathcal{T} = \{S_1, S_i, ..., S_j, S_K\}$. 
\end{definition}

In the worse case, we only assume that $S_1$ and $S_K$ are given. 
Note that since the observations are discontinuous, the transition between contiguous state observations is not necessarily the result of a single action. 
When more than the initial and final states are given in the trace, it can be considered as a set of traces, 
broken in the form of $\{\{S_1, S_i\},..., \{S_j, S_K\}\}$.
Furthermore, when observations of the state are incomplete, 
the simplest solution is to generate all possible compatible plan traces with complete state specifications.  
We assume that the new plan execution traces are processed as above, and a training set $D$ is generated.  

A common way is to model $F_{ij}$ using a {\em beta} distribution (i.e., as its density function $\rho$).
Refer to \cite{neapolitan2004} for arguments for this distribution. 
Denote the parameters for the beta distribution of $F_{ij}$ as $a_{ij}$ and $b_{ij}$.

Suppose that the initial values or the current values for $a_{ij}$ and $b_{ij}$ are given.
The only remaining task is to update $a_{ij}$ and $b_{ij}$ from the traces.
Given the training set $D$,
we can now follow Bayesian inference to update the parameters of $F_{ij}$ as follows:

Initially or currently,
\begin{equation}
	\rho(f_{ij}) = beta(f_{ij}; a_{ij}, b_{ij})
\end{equation}
After observing new training examples $D$, we have:
\begin{equation}
	\rho(f_{ij}|D) = beta(f_{ij}; a_{ij} + s_{ij}, b_{ij} + t_{ij})
\end{equation}
in which $s_{ij}$ is the number in which $X_i \in V_\phi$ is equal to {\em true} with $PA_i$ taking the value of $pa_{ij}$,
and $t_{ij}$ is the number in which it equals {\em false} with $PA_i$ taking the value of $pa_{ij}$.

\section{More about Capability Models}

As discussed, the capability model is also a two level dynamic Bayesian network, 
except that what connects the two levels are operations rather than actions.  

\begin{theorem}
Planning with a single agent using capability model is NP-hard, but is polynomial when the capability model is a Polytree. 
\end{theorem}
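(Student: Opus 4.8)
The plan is to build on the observation, noted earlier, that planning with a single agent modeled as a capability model is essentially Bayesian inference. An initial state $I$ fixes the values of the fact nodes whose variables lie in $C$ and $D$, while a goal state $G$ fixes the values of the e-nodes whose variables lie in $A$ and $B$; the planning question ``does an operation of positive probability reach $G$ from $I$?'' is then exactly whether the probability of the goal event given the initial evidence is positive in the capability model's DAG $G_\phi = (V_\phi, E_\phi)$ of Definition~\ref{def:cap}. Both halves of the theorem then reduce to classical facts about inference in Bayesian networks \cite{neapolitan2004}: such inference is NP-hard in general but polynomial on singly connected networks. The remaining work is to show that the rigid capability-model structure is expressive enough to inherit the hardness, yet harmless enough to inherit the tractability.

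For the NP-hardness half, I would give a polynomial reduction from 3SAT. Given a formula over variables $u_1, \dots, u_n$ and clauses $c_1, \dots, c_m$, I would realize the standard satisfiability circuit entirely within the fact-node layer, whose causal edges $Y_\phi$ may form an arbitrary DAG: one root fact node per $u_k$ with parameter $1/2$, one fact node per clause computing a deterministic disjunction of its three literals, and a cascade of deterministic conjunction nodes producing an output fact node $X_{out}$. Because the model is \emph{assumed} to be instantiated for planning, each logic gate is obtained by setting the relevant parameters $f_{ij}$ to $0$ or $1$, which the binomial ABN permits since $0 \leq f_{ij} \leq 1$ and $P(X_i = \textit{true} \mid pa_{ij}, f_{ij}) = f_{ij}$. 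I would then read off the answer through the mandatory edge $X_{out} \rightarrow \dot{X}_{out}$, forcing $\dot{X}_{out}$ to copy $X_{out}$, and take $\dot{X}_{out} = \textit{true}$ as the goal with an empty initial state; the goal probability is positive exactly when the formula is satisfiable, so any polynomial-time planner would decide 3SAT. No directed cycle is introduced, since every e-node is a leaf by the construction of $\dot{Y}_\phi$.

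For the polytree half, I would invoke Pearl's belief-propagation algorithm \cite{neapolitan2004}. When $G_\phi$ is singly connected, the required conditional probability is computed exactly by one inward and one outward pass of the $\lambda$- and $\pi$-messages over the tree skeleton. Since every node has the binary domain of Definition~\ref{def:cap} and bounded indegree, each message costs constant work and the total cost is linear in $|V_\phi|$; evaluating the planning probability is precisely this query, so planning is polynomial on polytrees.

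The step I expect to be the main obstacle is pinning down the planning decision problem so that \emph{both} claims hold at once. Specifically, I must justify that the query is a plain conditional-probability (reachability) query and not a marginal-MAP query over the unobserved intermediate variables, because marginal MAP stays NP-hard even on polytrees and would falsify the second claim. The secondary care is the faithful bridge between the capability semantics (existence of an operation) and positivity of the network probability, together with checking that the deterministic gates and the mirrored fact-to-e-node edges leave the construction a legal binomial ABN.
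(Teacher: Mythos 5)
Your proposal is correct and follows essentially the same route as the paper, whose entire proof is the single sentence that single-agent planning with a capability model is equivalent to Bayesian inference, implicitly appealing to the standard NP-hardness and polytree-tractability results for that problem. Your worry about marginal-MAP is resolved exactly as you suspect: the paper's single-agent ``plan'' is a single operation whose output is just the conditional probability $P(\dot{A} \wedge \neg \dot{B} \mid C \wedge \neg D)$, so the query is a plain conditional-probability computation, and your 3SAT gadget and belief-propagation argument simply supply the details the paper omits.
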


The proof is straightforward since planning with a single agent using capability model is equivalent to Bayesian inference.
However, as we discussed previously, the synthesized plan is incomplete and abstract given the inherent incomplete and abstract nature of such models. 
In the single agent case, the output is only a probability measure with a single operation (i.e., the most abstract plan).
For multi-agent planning, this incompleteness and abstraction introduce a few complexities when using the model .

Suppose that we plan to use an operation with the capability specification of $C \wedge \neg D \rightarrow A \wedge \neg B$.
Through Bayesian inference, the capability model would return the corresponding probability of $P(\dot{A} \wedge \neg \dot{B} | C \wedge \neg D)$.\footnote{Note that $\dot{A}$, representing $\wedge_{a \in A}\dot{a} = $ {\em true}, is not equivalent to $\forall a \in A, a = $ {\em true} after an operation. Nevertheless, we assume that, e.g., $P(\dot{A} | C)$, is an approximation of the probability that an operation exists, which satisfies the capability specification of $C \rightarrow A$ (i.e., all variables in $A$ become {\em true} after this operation).}
We use the corresponding variables (i.e., e-nodes) here for $A$ and $B$, since they represent the same variables after the operation. 
This measure represents the probability of the existence of an operation for the specification of $C \wedge \neg D \rightarrow A \wedge \neg B$.
It means that this operation, unfortunately, may fail with certain probability as well. 

Furthermore, note that here, nothing about the values of other variables are specified in the (partial) initial and final states in the specification of this operation.
This means that the agent may be relying on other variables (that are not in $C$ or $D$) in the complete initial state to execute this operation 
(which may cause a failure for this operation), 
and that the agent can change the values of other variables (that are not in $A$ or $B$) in the complete final state after this operation.
To reduce the possible outcomes of an operation, we assume that the agent is rational, 
such that the agent would only change variables that are related to the achievement of $A$ and $\neg B$, 
i.e., $AC_\phi(A \cup B)$,
which represents the union set of ancestors of $A$ and $B$ in $X_\phi$ for agent $\phi$. 

In the following, we note a few properties that the capability model satisfies:  

{\em Monotonicity}: 
\begin{equation}
	P(\dot{A}, \neg \dot{B}|C, \neg D) \geq P(\dot{A'}, \neg \dot{B}|C, \neg D) (A \subseteq A')
\label{equ:mono1}
\end{equation} 
\begin{equation}
	P(\dot{A}, \neg \dot{B}|C, \neg D) \geq P(\dot{A}, \neg \dot{B'}|C, \neg D) (B \subseteq B')
\label{equ:mono2}
\end{equation} 

These imply that it is always more difficult to achieve the specified values for more variables.
Another aspect of monotonicity follows:
\begin{equation}
	P(\dot{A}, \neg \dot{B}|C, \neg D) \geq P(\dot{A}, \neg \dot{B}|C', \neg D) (C \supseteq C)
\label{equ:mono3}
\end{equation} 
\begin{equation}
	P(\dot{A}, \neg \dot{B}|C, \neg D) \geq P(\dot{A}, \neg \dot{B}|C, \neg D') (D \subseteq D')
\label{equ:mono4}
\end{equation} 

These imply that it is always easier to achieve the desired state with more {\em true}-value variables. 
This property holds with the assumption that no negated propositions appear in the preconditions of actions.
Note that this assumption does not reduce the generality of the domain, 
since negated propositions can be compiled away.

\section{MAP-MM}
\label{sec:map-mm}

In this section and next, we concentrate on how capability models can be used along with other more complete models to refine the synthesized plans. 
The settings are similar to our motivating example. 
In the first problem, which we call multi-agent planning with mixed models (MAP-MM). 
This formulation is useful especially for applications that involve both human and robot agents. 
While robot agents are programmed and hence have given models (we assume action based models here), 
the models of human agents must be learnt.
Hence, we use the agent capability model to model human agents and use the STRIPS action model to model robot agents.
We assume that the models for the human agents are already learnt in the following discussions. 

For robot agents, we assume the STRIPS action model $\langle \mathcal{R}, \mathcal{O} \rangle$, in which $\mathcal{R}$ is a set of predicates with typed variables, $\mathcal{O}$ is a set of STRIPS operators.
Each operator $o \in \mathcal{O}$ is associated with a set of preconditions $Pre(o) \subseteq \mathcal{R}$, add effects $Add(o) \subseteq \mathcal{R}$ and delete effects $Del(o) \subseteq \mathcal{R}$.

\begin{definition}
Given a set of robots $R = \{r\}$, a set of human agents $\Phi = \{\phi\}$, and a set of typed objects $O$, a multi-agent planning problem with mixed models is given by a tuple $\Pi = \langle X, \Phi, R, I, U, G \rangle$, where:
	\begin{itemize}
		\item $X$ is a finite set of propositions (i.e., corresponding to the set of variables specifying the world state) instantiated from predicates $\mathcal{R}$ and objects $O$; $I \subseteq X$ encodes the set of propositions that are true in the initial state; $U \subseteq X$ encodes the set of propositions that are unknown; other propositions (i.e., $X \setminus I \setminus U$) are assumed to be false; $G \subseteq X$ encodes the set of goal propositions.
		\item $A(r)$ is the set of actions that are instantiated from $\mathcal{O}$ and $O$, which $r \in R$ can perform.
		\item $\mathcal{B}_\phi$ is the capability model (see Figure \ref{fig:cap-model} for an example) of $\phi \in {\Phi}$, and $V_\phi \subseteq X$.
	\end{itemize}
\label{def:hat}
\end{definition}

The solution for a MAP-MM problem is a plan that maximizes the probability of success.  

\begin{theorem}
The MAP-MM problem is at least PSPACE-complete.
\end{theorem}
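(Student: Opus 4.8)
The plan is to establish PSPACE-hardness by reducing from propositional STRIPS plan-existence, which is the canonical PSPACE-complete problem (Bylander's classical result). Since MAP-MM strictly generalizes deterministic classical planning, a reduction that embeds an arbitrary STRIPS instance as a degenerate MAP-MM instance transfers the hardness lower bound directly; the ``at least'' in the statement then reflects that the full problem, which additionally carries probabilistic Bayesian inference over the capability models, may in fact sit strictly above PSPACE.

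First I would fix the target special case. Given a propositional STRIPS instance with predicates $\mathcal{R}$, operators $\mathcal{O}$, initial state $I_0$ and goal $G_0$, I construct the MAP-MM tuple $\Pi = \langle X, \Phi, R, I, U, G \rangle$ with a single robot $r$ whose action set $A(r)$ is instantiated from exactly $\langle \mathcal{R}, \mathcal{O} \rangle$, with $I = I_0$, $G = G_0$, and crucially $U = \emptyset$ so that every proposition is fully determined and all robot actions are deterministic. To respect Definition \ref{def:hat} I would include a single trivial human agent $\phi$ whose capability model $\mathcal{B}_\phi$ is supported on a set of fresh variables $V_\phi \subseteq X$ disjoint from $G$ and from the propositions touched by the robot's actions, so that $\phi$ contributes a constant factor of $1$ to any plan's success probability and cannot influence goal achievement. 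The reduction is clearly computable in polynomial (indeed linear) time in the size of the STRIPS instance.

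Next I would argue correctness of the reduction. In this degenerate instance there is no uncertainty: a plan either reaches $G$ deterministically, in which case its probability of success is $1$, or it fails to reach $G$, in which case the probability is strictly below $1$ (it is $0$ under the natural semantics). Hence the MAP-MM objective --- returning a plan that maximizes the probability of success --- attains value $1$ if and only if the original STRIPS instance admits a valid plan from $I_0$ to $G_0$. Thus deciding whether the optimal MAP-MM success probability equals $1$ solves STRIPS plan-existence, yielding PSPACE-hardness of MAP-MM.

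The main obstacle I anticipate is not the hardness direction but pinning down a matching upper bound, which is precisely why the statement asserts only ``at least'' PSPACE-complete. With the capability models present, evaluating the success probability of even a single fixed plan already requires Bayesian inference in the associated augmented network, and by the preceding theorem this inference is NP-hard in general (and \#P-hard for exact evaluation); combined with the PSPACE-complete search over action sequences, this suggests the true complexity may exceed PSPACE rather than coincide with it. I would therefore present the reduction above as the rigorous lower bound and treat the exact placement of MAP-MM within the space and counting hierarchies as a separate, more delicate question rather than something claimed here.
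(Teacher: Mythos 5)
Your proof is correct and takes essentially the same approach as the paper: restrict to a degenerate instance in which only the robot agents matter, so that MAP-MM collapses to propositional STRIPS plan-existence, which is PSPACE-complete, and conclude hardness by inheritance. Your write-up is in fact more careful than the paper's two-sentence sketch --- notably the trivial human agent on fresh variables needed to conform to the problem definition, and the explicit ``optimal success probability equals $1$ iff a plan exists'' correctness argument --- but it is the same reduction in substance.
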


\begin{proof}
We only need to prove the result for one of the extreme cases: 
when there are only robot agents. 
The problem then essentially becomes a multi-agent planning problem,
which is more general than the classical planning problem, which is known to be PSPACE-complete.
\end{proof}

\section{Planning for MAP-MM}

We discuss in this section how to use an $A^*$ search process to perform planning for a MAP-MM problem . 
The current planning state $S$ is composed of three sets of propositions, denoted as $T(S)$, $N(S)$, and $U(S)$.
$T(S)$ is the set of propositions known to be {\em true} in the current state, $N(S)$ is the set of propositions known to be {\em false}, 
and $U(S)$ is the set of propositions that are {\em unknown}. 
Given the current state $S$, the (centralized) planner can expand it in the next step using the following options: 

\begin{itemize}
	\item Choose an action $a \in A(r) $ such that $Pre(a) \subseteq T(S)$.
	\item Choose a human operation on human agent $\phi$ with the specification $C \wedge \neg D \rightarrow A \wedge \neg B$, such that $C \subseteq T(S)$ and $D \subseteq N(S)$.
\end{itemize}

For an action $a$ from the robot agent, the planning state after applying $a$ becomes $S'$, 
such that $T(S') = T(S) \cup Add(a) \setminus Del(a)$, 
$N(S') = N(S) \cup Del(a) \setminus Add(a)$,
and $U(S') = U(S) \setminus Add(a) \setminus Del(a)$. 
Note that the number of unknown variables is reduced after applying this action.

For a human operation on agent $\phi$, 
the planning state after applying the operation becomes $T(S') = T(S) \cup A \setminus B \setminus AC_\phi(A \cup B)$,
$N(S') = N(S) \cup B \setminus A \setminus AC_\phi(A \cup B)$, 
and $U(S') = U(S) \cup AC_\phi(A \cup B) \setminus A \setminus B$.
The number of unknown variables increases.

To compute the $f$ (i.e., cost) values for the $A^*$ search, 
we take the negative logarithms of the associated probabilities of reachable planning states.
Each planning state that is reachable from the initial state is associated with a probability of success, 
which is the product of the probabilities of success of the operations used to reach the current state from the initial state, 
given the partial plan constructed.
Hence, the $g$ value can be computed straightforwardly. 
To compute an admissible $h$ value to each planning state, we denote the set of goal propositions that are currently
{\em false} or {\em unknown}, and that can only be added by a human agent as $G_\Phi$. 
Denote the set of human agents that can add $p \in G_{\Phi}$ by $\Phi_p$
We first compute the human agent with the minimum cost to achieve $p \in G_\Phi$ based on the following formula:
\begin{equation}
	\phi_p = \argmax_{\phi \in \Phi_p}-\log{P(p | X_{\phi} \setminus p)}
\label{equ:selectmax}
\end{equation}

We then compute the $h$ value as:
\begin{equation}
	h(S) = \max_{p \in G_{\Phi}} -\log{P(p | X_{\phi_p} \setminus p)}
\label{equ:heuristic}
\end{equation}

\begin{lemma}
The heuristic given in Eq. \eqref{equ:heuristic} is admissible. 
\label{lem:admissible}
\end{lemma}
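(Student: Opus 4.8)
The plan is to establish admissibility directly by showing $h(S) \le h^\ast(S)$, where $h^\ast(S)$ is the minimum cost over all plans that carry $S$ to a goal state, the cost of a plan being the sum of the negative log success probabilities of its operations and actions (the product of per-operation probabilities becomes a sum under the negative logarithm, matching the stated $g$-value computation, and every term is non-negative since each probability is at most one). First I would note that any goal-reaching plan from $S$ must, in particular, make every proposition in $G_\Phi$ true, and by the definition of $G_\Phi$ each such proposition can be added only by some human operation. Hence the plan must contain, for each $p \in G_\Phi$, an operation on some agent $\phi$ whose add-set contains $p$, and $h^\ast(S)$ is bounded below by the cost that any such plan pays for the single most expensive proposition $p^\ast \in G_\Phi$.

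The core of the argument is a per-proposition lower bound obtained from the monotonicity properties. Fix $p \in G_\Phi$ achieved along the plan by an operation with specification $C \wedge \neg D \rightarrow A \wedge \neg B$ where $p \in A$, evaluated at the (possibly partial) state actually reached. By Eq.~\eqref{equ:mono1}, enlarging the achieved set only lowers the probability, so $P(\dot A, \neg \dot B \mid C, \neg D) \le P(\dot p \mid C, \neg D)$; thus the cost the plan pays for this operation is at least the cost of achieving $p$ alone. By Eqs.~\eqref{equ:mono3}--\eqref{equ:mono4}, enlarging the set of \emph{true}-valued conditioning variables only raises the probability, so conditioning on the maximally favorable context $X_\phi \setminus p$ gives $P(\dot p \mid C, \neg D) \le P(p \mid X_\phi \setminus p)$, where we identify $\dot p$ with $p$ after the operation as in the main text. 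Chaining these two inequalities, the cost the plan actually pays to achieve $p$ is at least $-\log P(p \mid X_\phi \setminus p)$, and selecting the cheapest admissible agent $\phi_p$ as in Eq.~\eqref{equ:selectmax} makes $-\log P(p \mid X_{\phi_p} \setminus p)$ a valid lower bound on the cost of achieving $p$ in any plan, independent of which agent, operation, or setup steps the plan employs.

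I would then combine these bounds. Letting $O_j$ be the operation in the plan that achieves $p^\ast$, the previous paragraph gives that its cost $c_j$ satisfies $c_j \ge -\log P(p^\ast \mid X_{\phi_{p^\ast}} \setminus p^\ast)$. Since all per-operation costs are non-negative, $h^\ast(S) \ge c_j \ge -\log P(p^\ast \mid X_{\phi_{p^\ast}} \setminus p^\ast) = \max_{p \in G_\Phi} -\log P(p \mid X_{\phi_p} \setminus p) = h(S)$, which is exactly the claimed $h(S) \le h^\ast(S)$.

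The step I expect to be the main obstacle is making the per-proposition bound survive the fact that one human operation may achieve several goal propositions at once: the argument must attribute the full cost of such a combined operation to the single hardest proposition it produces without undercounting, and Eq.~\eqref{equ:mono1} is precisely what licenses this, since achieving the bundle is provably no cheaper than achieving any one member. A secondary care point is that the heuristic conditions on the idealized context $X_\phi \setminus p$ rather than the actual state reachable from $S$ after earlier setup operations; here I would lean on Eqs.~\eqref{equ:mono3}--\eqref{equ:mono4} to guarantee that this idealization can only increase the estimated probability, hence only decrease the estimated cost, keeping $h$ optimistic and therefore admissible.
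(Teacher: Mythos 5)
Your argument is correct and follows essentially the same route as the paper's own (much terser) proof: reduce the optimal cost to the cost of the hardest single proposition in $G_\Phi$, then invoke the monotonicity properties to justify both collapsing a multi-proposition operation to a single proposition and conditioning on the idealized context $X_\phi \setminus p$. The paper compresses all of this into one sentence citing ``the monotonicity property,'' so your version is simply a faithful expansion of the intended argument (including the implicit requirement that $\phi_p$ be chosen so the bound holds for whichever agent an arbitrary plan uses).
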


\begin{proof}
We need to prove that $h(S)$ is not an over-estimate of the cost for any state $S$, given $G$. 
If $G_\Phi$ is empty, $h(S) = 0$, and it must not be an over-estimate given non-negative cost measures. 
Otherwise, the propositions in $G_\Phi$ must be made {\em true} by a human agent. 
The minimum cost that has to be incurred is no less than the maximum cost for making an individual proposition {\em true}, 
given that all the other propositions are {\em true}. 
This result is based on the {\em monotonicity} property. 
Hence, the conclusion holds. 
\end{proof}

\begin{lemma}
The heuristic given in Eq. \eqref{equ:heuristic} is consistent. 
\label{lem:consistent}
\end{lemma}

\begin{proof}
We need to prove that $h(S) - h(S') \leq dist(S, S')$ (interpreted as from $S$ to $S'$).
Note the asymmetry of the distance measure here. 
From Eq. \eqref{equ:heuristic}, suppose that $h(S) = -\log{P(p | X_{\phi_p} \setminus p)}$ and that $h(S') = -\log{P(p' | X_{{\phi'}_{p'}} \setminus p')}$.
If $p = p'$, clearly, we have $h(S) = h(S')$, and given non-negative distance measures, the conclusion holds.
Otherwise, first, we know that $S$ does not contain $p$ and $S'$ does not contain $p'$.
Furthermore, we cannot have that both $S$ and $S'$ contain neither $p$ or $p'$, given Eq. \eqref{equ:selectmax},
since otherwise we would have $h(S) = h(S')$, which implies that $p = p'$.
Without loss of generality, assume that $h(S) \geq h(S')$.
As a result, we must have that $S'$ contains $p$, since otherwise we would have chosen $p$ in $h(S')$.
Hence, $dist(S, S')$ must be at least the cost for making $p$ {\em true} in $S$.
Hence, based on the {\em monotonicity} property, the conclusion holds. 
\end{proof}

Lem. \ref{lem:admissible} and \ref{lem:consistent} ensure that the $A^*$ search is {\em optimally efficient}, 
meaning that the search expands only the necessary nodes (i.e., planning state) in order to find the optimal solution.  
Nevertheless, given the complexity, approximation solutions must be provided to scale to large problem instances.   

\section{MAP-MMI}

There are many extensions to the MAP-MM problem. 
One immediate extension is to consider interaction cost between the human and robot agents, which we call MAP-MMI (for interaction-aware MAP-MM). 
While MAP-MM is useful for a plan that is most likely to succeed,
MAP-MM formulation is useful when agents must coordinate to execute a conditional plan during the {\em execution} phase,
and communication is limited. 
In particular, the human agents need to be informed when to execute an operation. 

In such scenarios, the plan made is similar to a conditional plan.
More specifically, the plan should specify the planning branches depending on whether an operation succeeds or fails.
During plan execution, an actual request to perform the operation would be made. 
A failure would be reported when that operation fails to be executed in the current situation.

In MAP-MMI, for each request of a human operation (which is planned during the planning phase but incurred in the execution phase), there is a cost associated. 
For simplicity, we assume that the cost is constant per request. 
Since the communication is limited, only a fixed number of requests may be made.

\begin{definition}
A MAP-MMI problem is a MAP-MM problem with a communication threshold $\mathcal{C}$. 
The solution is a conditional plan with the maximum probability of success while keeping the number of requests (i.e., interaction cost) below or equal to $\mathcal{C}$.
\end{definition}

\begin{corollary}
MAP-MMI is at least PSPACE-complete.
\end{corollary}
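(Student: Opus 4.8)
The plan is to exploit the fact that MAP-MMI strictly generalizes MAP-MM and then invoke the hardness already established in the previous theorem. Every MAP-MM instance can be viewed as a MAP-MMI instance simply by attaching a communication threshold $\mathcal{C}$ that is never binding; taking $\mathcal{C}$ to be the total number of available human operations (a trivial upper bound on the number of requests any plan could issue) makes the interaction constraint vacuously satisfied by every candidate plan. Under such a threshold the feasible plans and their success probabilities coincide exactly with those of the underlying MAP-MM problem, so an optimal MAP-MMI solution is an optimal MAP-MM solution, and the reduction is clearly polynomial in the instance size.

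To actually pin down the lower bound I would follow the same route used in the proof that MAP-MM is at least PSPACE-complete, namely restrict to the extreme case in which there are only robot agents and no human agents. In this case no human operations exist, so no requests are ever made: the threshold $\mathcal{C}$ is trivially respected with zero requests, and the conditional-plan requirement degenerates because there is no stochastic human operation on which to branch. Since the robot action models are deterministic STRIPS models, the maximum probability of success is either $1$ or $0$, so deciding whether an optimal MAP-MMI plan achieves success probability $1$ is exactly the plan-existence question for multi-agent planning over deterministic action models, which subsumes classical planning and is PSPACE-complete. Hence the robot-only MAP-MMI instances already contain a PSPACE-complete problem as a special case, and MAP-MMI in general is at least as hard.

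The step requiring the most care is arguing that the added communication machinery genuinely does not lower the complexity, i.e. that introducing $\mathcal{C}$ and the conditional-plan formulation does not accidentally make the problem easier to decide than the planning itself. This is why I would lead with the robot-only case rather than with the general non-binding-threshold argument: once human operations are removed, the threshold becomes inert and the branching structure collapses, so the subtlety disappears entirely and the reduction reduces cleanly to the classical planning argument inherited from the parent theorem.
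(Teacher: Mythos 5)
Your proof is correct and follows essentially the same route as the paper's: you reduce MAP-MM to MAP-MMI by choosing a communication threshold $\mathcal{C}$ large enough to be non-binding, so the hardness of MAP-MM (established via the robot-only special case) transfers directly. Your explicit restriction to robot-only instances is a slightly cleaner way of handling the degenerate conditional-plan structure than the paper's remark about ``ending operation failures in undefined states,'' but the underlying argument is the same.
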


\begin{proof}
Since MAP-MMI can be considered as a special case of MAP-MM with a constraint on the interaction cost, the conclusion is straightforward. 
More specifically, we can reduce a MAP-MM problem to a MAP-MMI problem. 
Suppose that we have an efficient solution for MAP-MMI.
We can derive an efficient solution for any MAP-MM problem by simply ending operation failures in undefined states, 
while assigning a large value to $\mathcal{C}$. 
It is not difficult to see that the solution to this MAP-MMI problem is also the solution for the original MAP-MM problem.
\end{proof}

Planning for MAP-MMI can be performed in a similar way as in MAP-MM.
The major differences lie in how a planning state $S$ is represented and how it can be expanded. 
In particular, a state $S$ in MAP-MMI contains more than one substate to be expanded, since we are creating a conditional plan.
Denote $s \in S$ as one substate (which is similar to a planning state in MAP-MM), and each such substate is associated with a probability of success.  
We can perform the similar expansions to each $s$ in $S$ as in MAP-MM, except that the resulting states are different, 
and that the number of requests from each state $S$ is upper-bounded by $\mathcal{C}$.

When an action is chosen from $a \in A(r)$, $s$ is expanded similarly as in MAP-MM.
When a human operation on agent $\phi$ is used in the form of $C \wedge \neg D \rightarrow A \wedge \neg B$, 
$s$ is expanded into two substates.
One substate represents that the operation succeeds, and the other one representing a failure. 
The probability of success with $s$ is also split onto these two substates based on the probability
of success of the chosen operation computed from $\mathcal{B}_\phi$.
While the substate for a successful operation is updated according to our previous discussion for MAP-MM, 
the substate when the operation fails is updated in a different manner, 
since $A$ may not have been achieved.
In particular, the new substate $s'$ is updated such that $T(s') = T(s) \setminus AC_\phi(A \cup B) \setminus A \setminus B$,
$N(s') = N(s) \setminus AC_\phi(A \cup B) \setminus A \setminus B$, 
and $U(s') = U(s) \cup A \cup B \cup AC_\phi(A \cup B)$.

The heuristic cost for $h$ can be used for each substate as in Eq. \eqref{equ:heuristic}. 
To combine the heuristic costs for different substates, we need to first convert the costs to a probability measures,
and then sum these converted measures to obtain the probability of success for $S$. 
This probability can then be converted to a cost measure to obtain the $h$ value for $S$.
The admissibility and consistency properties then follow immediately from Lem. \ref{lem:admissible} and \ref{lem:consistent}.

\section{Conclusions}
\label{sec:con}

In this paper, we introduce a new representation to model agents based on capability models, which has several unique advantages. 
The underlying structures of these models are generalized $2$-TBNs, 
and we assume that the (potentially partial) causal relationships between the nodes in the networks are provided by the domain expert. 
First, we show that learning capability models can be performed efficiently online via Bayesian learning, 
and the learning process is robust to high degrees of incompleteness in plan execution traces (e.g., with only start and end states). 
Furthermore, when used in multi-agent planning (with each agent modeled by a capability model), our models provide flexible abstraction of actions, 
which can potentially improve planning performance.
The limitation, however, is that the synthesized plan is incomplete and abstract. 
At each step, instead of specifying which action to execute as with other models, 
the plan only specifies a partial state to be achieved. 
We provide details on how these models are constructed and how the parameters are learnt and updated online. 
Moreover, we introduce two multi-agent planning problems that use these models, and provide heuristic planning methods for them.
These problems also illustrate how capability models can be combined with other more complete models to refine the plans. 

\bibliography{paper.bib} 
\bibliographystyle{aaai}

\end{document}